\theoremstyle{thmstyleone}%
\newtheorem{theorem}{Theorem}%
\theoremstyle{thmstyletwo}%
\theoremstyle{thmstylethree}%
\begin{document}

\title[Feature subset selection for kernel SVM classification]{Feature subset selection for kernel SVM classification via mixed-integer optimization}

\author*[1,2]{\fnm{Ryuta} \sur{Tamura}}\email{r.tamura.cbc@gmail.com}

\author[3]{\fnm{Yuichi} \sur{Takano}}

\author[4]{\fnm{Ryuhei} \sur{Miyashiro}}

\affil[1]{\orgdiv{Graduate School of Engineering}, \orgname{Tokyo University of Agriculture and
Technology}, \orgaddress{\street{2-24-16 Naka-cho}, \city{Koganei-shi}, \postcode{184-8588}, \state{Tokyo}, \country{Japan}}}

\affil[2]{%\orgdiv{Department}, 
\orgname{October Sky Co., Ltd.}, \orgaddress{\street{Zelkova Bldg., 1-25-12 Fuchu-cho}, \city{Fuchu-shi}, \postcode{183-0055}, \state{Tokyo}, \country{Japan}}}

\affil[3]{\orgdiv{Faculty of Engineering, Information and Systems}, \orgname{University of Tsukuba}, \orgaddress{\street{1-1-1 Tennodai}, \city{Tsukuba-shi}, \postcode{305-8573}, \state{Ibaraki}, \country{Japan}}}

\affil[4]{\orgdiv{Institute of Engineering}, \orgname{Tokyo University of Agriculture and
Technology}, \orgaddress{\street{2-24-16 Naka-cho}, \city{Koganei-shi}, \postcode{184-8588}, \state{Tokyo}, \country{Japan}}}

\abstract{%
We study the mixed-integer optimization (MIO) approach to feature subset selection in nonlinear kernel support vector machines (SVMs) for binary classification. 
First proposed for linear regression in the 1970s, this approach has recently moved into the spotlight with advances in optimization algorithms and computer hardware. 
The goal of this paper is to establish an MIO approach for selecting the best subset of features for kernel SVM classification. 
To measure the performance of subset selection, we use the kernel--target alignment, which is the distance between the centroids of two response classes in a high-dimensional feature space. 
We propose a mixed-integer linear optimization (MILO) formulation based on the kernel--target alignment for feature subset selection, and this MILO problem can be solved to optimality using optimization software. 
We also derive a reduced version of the MILO problem to accelerate our MILO computations. 
Experimental results show good computational efficiency for our MILO formulation with the reduced problem. 
Moreover, our method can often outperform the linear-SVM-based MILO formulation and recursive feature elimination in prediction performance, especially when there are relatively few data instances. 
} 

\keywords{%
Feature subset selection, Support vector machine, Mixed-integer optimization, Kernel--target alignment, Machine learning
}

\maketitle

\section{Introduction}\label{sec1}

\subsection{Overview and related work}
Support vector machines (SVMs) are a family of sophisticated pattern recognition methods based on optimal separating hyperplanes. 
This method was first devised for binary classification by Boser et~al.~\cite{BoGu92} in combination with the kernel method~\cite{AiBr64} for nonlinear data analyses. 
Since then, SVMs have attracted considerable attention in various scientific fields because of their solid theoretical foundations and high generalization ability~\cite{CeGa20,CrSh00,Va98}.  
Kernel methods have been extended to a variety of multivariate analyses (e.g., principal component analysis, cluster analysis, and outlier detection)~\cite{ScSm02,ShCh04}, and they have also been applied to dynamic portfolio selection~\cite{TaGo11,TaGo14}. 

Feature subset selection involves selecting a subset of relevant features used in machine learning models. 
Such selection helps to understand the causality between predictor features and response classes, and it reduces the data collection/storage costs and the computational load of training machine learning models. 
Moreover, the prediction performance can be improved because overfitting is mitigated by eliminating redundant features. 
Because of these benefits, algorithms for feature subset selection have been extensively studied~\cite{ChSa14,GuEl03,LiCh17,LiMo07}. 
These algorithms can be categorized into filter, wrapper, and embedded methods. 
Filter methods (e.g., Fisher score~\cite{GuGu08} and relief~\cite{CaSh07,KiRe92}) rank features according to evaluation criteria before the training phase. 
Wrapper methods (e.g., recursive feature elimination~\cite{GuGu08}) search for better subsets of features through repeated training of subset models. 
Embedded methods (e.g., $L_1$-regularized estimation~\cite{HaTi19}) provide a subset of features as a result of the training process. 

We address the mixed-integer optimization (MIO) approach to feature subset selection. 
First proposed for linear regression in the 1970s~\cite{ArDo81}, this approach has recently gained attention with advances in optimization algorithms and computer hardware~\cite{BeKi16b,CoSa14,HaTi20,KoYa20,UsRu16}. 
Compared with many heuristic optimization algorithms, the MIO approach has the advantage of selecting the best subset of features with respect to given criterion functions~\cite{MiTa15a,MiTa15b,TaMi20}.
MIO methods for feature subset selection have been extended to logistic regression~\cite{BeKi17,SaTa16}, ordinal regression~\cite{NaTa19,SaTa17}, count regression~\cite{SaKu21}, dimensionality reduction~\cite{BeBe19,WaTa21}, and elimination of multicollinearity~\cite{BeKi16a,BeLi20,TaKo17,TaKo19}.
MIO-based high-performance algorithms have also been designed for feature subset selection~\cite{BePa20,BePa21,DeHa21,HaMa20,HaMa21,KuTa20}. 

Several prior studies have dealt with feature subset selection in linear SVM classification. 
A typical approach involves approximating the $L_0$-regularization term (or the cardinality constraint) for subset selection by the concave exponential function~\cite{BrMa98}, the $L_1$-regularization term~\cite{BrMa98,WeEl03,Zhu03}, and convex relaxations~\cite{ChVa07,GhNa18}, and the $L_0$-regularization term can be handled more accurately by DC (difference of convex functions) algorithms~\cite{GaGo20,LeLe15}. 
Meanwhile, Maldonado et~al.~\cite{MaPe14} proposed exact MIO formulations for feature subset selection in linear SVM classification, and Labb{\'e} et~al.~\cite{LaMa19} applied a heuristic kernel search algorithm to the MIO problem; Lagrangian relaxation~\cite{GaGo17} and generalized Benders decomposition~\cite{Ay15} have also been used to handle large-scale MIO problems. 
However, since these algorithms are focused on linear classification, they cannot be applied to nonlinear classification based on the kernel method. 

The feature scaling approach has been studied intensively for feature subset selection in kernel SVM classification~\cite{ChVa02,GrCa02,MaLo18,MaWe11,WeMu00}. 
This approach introduces feature weights in a kernel function and updates them iteratively in the gradient descent direction. 
Other algorithms for feature subset selection in kernel SVM classification include the filter method based on local kernel gradients~\cite{HeBu00}, local search algorithms~\cite{MaKo07,MaWe09,Wa08}, and metaheuristic algorithms~\cite{HuWa06}. 
Several performance measures for kernel SVM classifiers have been used in feature subset selection. 
Chapelle et~al.~\cite{ChVa02} designed gradient descent methods for minimizing various performance bounds on generalization errors, Neumann et~al.~\cite{NeSc05} proposed DC algorithms to maximize the kernel--target alignment~\cite{CrKa06}, Wang~\cite{Wa08} considered the kernel class separability in feature subset selection, and Jim{\'e}nez-Cordero et~al.~\cite{JiMo21} used software for nonlinear optimization to obtain good-quality solutions to their min-max optimization problem. 
To our knowledge, however, no prior studies have developed an exact algorithm to compute the best subset of features in terms of a given performance measure for nonlinear kernel SVM classification. 

\subsection{Contribution}
The goal of this paper is to establish a practical MIO approach to selecting the best subset of features for nonlinear kernel SVM classification. 
In line with Neumann et~al.~\cite{NeSc05}, we use the kernel--target alignment~\cite{CrKa06} as an objective function for subset selection. 
Also known as the distance between two classes (DBTC)~\cite{SuZh10} in a high-dimensional feature space, the kernel--target alignment has many applications in various kernel-based machine learning algorithms~\cite{WaZh15}.  
First, we introduce an integer nonlinear optimization (INLO) formulation based on the kernel--target alignment for feature subset selection. 
Next, we reformulate the problem as a mixed-integer linear optimization (MILO) problem, which can be solved to optimality using optimization software. 
We also derive a reduced version of the MILO problem to accelerate our MILO computations. 

We assess the efficacy of our method through computational experiments using real-world and synthetic datasets. 
With the real-world datasets, our MILO formulations offer clear computational advantages over the INLO formulation. 
In addition, the problem reduction offers highly accelerated MILO computations. 
With the synthetic datasets, our method often outperforms the linear-SVM-based MILO formulation~\cite{MaPe14} and recursive feature elimination~\cite{GuGu08} in terms of accuracy for both classification and subset selection, especially when there are relatively few data instances.  

\subsection{Organization and notation}
In Section~\ref{sec2}, we review SVMs for binary classification and introduce the kernel--target alignment.
In Section~\ref{sec3}, we present our MIO formulations based on the kernel--target alignment for feature subset selection.  
We report the computational results in Section~\ref{sec4} and conclude in Section~\ref{sec5}. 

Throughout this paper, we denote the set of consecutive integers ranging from 1 to $n$ as $[n] := \{1,2,\ldots,n\}$.
We write a $p$-dimensional column vector as $\bm{x} := (x_j)_{j \in [p]} \in \mathbb{R}^p$, and an $m \times n$ matrix as $\bm{X} := (x_{ij})_{(i,j) \in [m]\times[n]} \in \mathbb{R}^{m \times n}$. 

\section{Support vector machines for binary classification}\label{sec2}

In this section, we review linear and kernel SVMs for binary classification~\cite{CeGa20,CrSh00,Va98}. 
We then introduce the kernel--target alignment~\cite{CrKa06,NeSc05} to be used for feature subset selection. 

\subsection{Linear SVM classification}\label{sec2-1}

We address the task of correctly assigning a binary class label $\hat{y} \in \{-1,+1\}$ to each $p$-dimensional feature vector $\bm{x} := (x_j)_{j \in [p]} \in \mathbb{R}^p$. 
We begin with the following linear classifier: 
\begin{align} \label{eq:lc}
f(\bm{x}) = \bm{w}^{\top} \bm{x} + b = \sum_{j=1}^p w_j x_j + b, 
\end{align}
where the bias term $b \in \mathbb{R}$ and the weight vector $\bm{w} := (w_j)_{j \in [p]} \in \mathbb{R}^p$ are parameters to be estimated. 
We predict a class label based on the sign of the classifier~\eqref{eq:lc} as
\begin{equation}\label{eq:clsrule}
\begin{cases}
f(\bm{x}) < 0 & \Rightarrow \quad \hat{y}=-1, \\
f(\bm{x}) > 0 & \Rightarrow \quad \hat{y}=+1. \\
\end{cases}
\end{equation}

Suppose that we are given a training dataset $\{(\bm{x}_i,y_i) \mid i \in [n]\}$ containing $n$ data instances, where $\bm{x}_i := (x_{ij})_{j \in [p]} \in \mathbb{R}^p$ and $y_i \in \{-1,+1\}$ for each $i \in [n]$. 
All data instances are correctly separated by Eq.~\eqref{eq:clsrule} using the linear classifier~\eqref{eq:lc} when 
\begin{align}\label{eq:sprb}
y_i f(\bm{x}_i) > 0 \quad (i \in [n]),
\end{align}
but this is usually impossible. 
For this reason, the linear separability condition~\eqref{eq:sprb} is relaxed by introducing a vector $\bm{\xi} := (\xi_i)_{i \in [n]} \in \mathbb{R}^n$ of classification errors. 

The soft-margin SVM minimizes the weighted sum of the $L_2$-regularization term $\|\bm{w}\|_2^2 = \sum_{j=1}^p w_j^2$ and classification errors as 
\begin{align}
\mathop{\mbox{minimize}}_{} & \quad \frac{1}{2} \|\bm{w}\|_2^2 + C \sum_{i=1}^n \xi_i \label{obj:CSVM} \\
\mbox{subject~to} 
& \quad y_i (\bm{w}^{\top} \bm{x}_i + b) \ge 1 - \xi_i \quad (i \in [n]), \label{con1:CSVM} \\
& \quad \xi_i \ge 0 \quad (i \in [n]), \label{con2:CSVM} \\
& \quad b \in \mathbb{R},~\bm{w} \in \mathbb{R}^p,~\bm{\xi} \in \mathbb{R}^n, \label{con3:CSVM}
\end{align}
where $C \in \mathbb{R}_{+}$ is a user-defined parameter representing the misclassification penalty. 
Throughout this paper, each optimization formulation lists decision variables on the last line (e.g., Eq.~\eqref{con3:CSVM}). 

The Lagrangian dual form of problem~\eqref{obj:CSVM}--\eqref{con3:CSVM} is expressed as 
\begin{align}
\mathop{\mbox{maximize}}_{} & \quad \sum_{i=1}^n \alpha_i - \frac{1}{2} \sum_{i=1}^n \sum_{h=1}^n \alpha_i \alpha_h y_i y_h \bm{x}_i^{\top} \bm{x}_h \label{obj:CSVM2} \\
\mbox{subject~to} 
& \quad \sum_{i=1}^n \alpha_i y_i = 0, \label{con1:CSVM2} \\
& \quad 0 \le \alpha_i \le C \quad (i \in [n]), \label{con2:CSVM2} \\
& \quad \bm{\alpha} \in \mathbb{R}^n, \label{con3:CSVM2}
\end{align}
where $\bm{\alpha} := (\alpha_i)_{i \in [n]} \in \mathbb{R}^n$ is a vector composed of Lagrange multipliers. 
Note that $\bm{w} = \sum_{i=1}^n \alpha_i y_i \bm{x}_i$ holds from the optimality condition; therefore, the linear classifier~\eqref{eq:lc} is rewritten as
\begin{align} \label{eq:lc2}
f(\bm{x}) = \bm{w}^{\top} \bm{x} + b = \sum_{i=1}^n \alpha_i y_i \bm{x}_i^{\top} \bm{x} + b. 
\end{align}

\subsection{Kernel SVM classification}\label{sec2-2}

We consider a feature map $\bm{\phi}:\mathbb{R}^p \to \mathcal{X}$, which nonlinearly transforms the original feature vector $\bm{x}$ into a high-dimensional feature vector $\bm{\phi}(\bm{x})$ in a feature space $\mathcal{X}$. 
A simple example with $\bm{x} = (x_1,x_2)^{\top}$ is given by
\begin{equation} \label{eq:exfm}
\bm{\phi}(\bm{x}) = (x_1, x_2, x_1^2, x_1 x_2, x_2^2)^{\top}. 
\end{equation}
We also define the kernel function
\begin{equation}\label{eq:kf}
k(\bm{x},\bm{x}') = \bm{\phi}(\bm{x})^{\top} \bm{\phi}(\bm{x}'),
\end{equation}
which is the inner product in a feature space. 
The associated kernel matrix is written as 
\begin{equation}\label{eq:km}
\bm{K} := \Bigl( k(\bm{x}_i,\bm{x}_h) \Bigr)_{(i,h) \in [n] \times [n]} \in \mathbb{R}^{n \times n}.
\end{equation}

We replace the original feature vector $\bm{x}$ with the high-dimensional feature vector $\bm{\phi}(\bm{x})$ in problem~\eqref{obj:CSVM2}--\eqref{con3:CSVM2} and the corresponding classifier~\eqref{eq:lc2}. 
The kernel SVM is then formulated as the following optimization problem: 
\begin{align}
\mathop{\mbox{maximize}}_{} & \quad \sum_{i=1}^n \alpha_i - \frac{1}{2} \sum_{i=1}^n \sum_{h=1}^n \alpha_i \alpha_h y_i y_h k(\bm{x}_i,\bm{x}_h) \label{obj:CSVM3} \\
\mbox{subject~to} 
& \quad \sum_{i=1}^n \alpha_i y_i = 0, \label{con1:CSVM3} \\
& \quad 0 \le \alpha_i \le C \quad (i \in [n]), \label{con2:CSVM3} \\
& \quad \bm{\alpha} \in \mathbb{R}^n, \label{con3:CSVM3}
\end{align}
where the associated nonlinear classifier is given by
\begin{align} \label{eq:nlc}
f(\bm{x}) = \sum_{i=1}^n \alpha_i y_i k(\bm{x}_i,\bm{x}) + b. 
\end{align}

We consider the Gaussian kernel function defined as
\begin{equation} \label{eq:gauss}
k(\bm{x},\bm{x}') 
= \exp\left( -\gamma \|\bm{x} - \bm{x}'\|_2^2 \right) 
= \exp\left( -\gamma \sum_{j=1}^p (x_j - x'_j)^2 \right),
\end{equation}
where $\gamma \in \mathbb{R}_{+}$ is a user-defined scaling parameter. 
It is known that the Gaussian kernel function~\eqref{eq:gauss} corresponds to the inner product in an infinite-dimensional feature space. 

Consequently, if the kernel function~\eqref{eq:kf} is computable (e.g., Eq.~\eqref{eq:gauss}), the nonlinear classifier~\eqref{eq:nlc} can be obtained from solving the Lagrangian dual problem~\eqref{obj:CSVM3}--\eqref{con3:CSVM3}. 
Called the kernel method, this procedure avoids having to define feature maps explicitly (e.g., Eq.~\eqref{eq:exfm}). 

\subsection{Kernel--target alignment}\label{sec2-3}

For the class label vector $\bm{y} := (y_i)_{i \in [n]} \in \{-1,+1\}^n$, the \emph{kernel--target alignment}~\cite{CrKa06,NeSc05} of the kernel matrix~\eqref{eq:km} is defined as 
\begin{equation} 
\hat{A}(\bm{K},\bm{y}\bm{y}^{\top}) 
:= \frac{\bm{K} \bullet (\bm{y}\bm{y}^{\top})}{n\| \bm{K}\|_{\mathrm{F}}} = \frac{\sum_{i=1}^n \sum_{h=1}^n y_i y_h k(\bm{x}_i, \bm{x}_h)}{n\sqrt{\sum_{i=1}^n \sum_{h=1}^n k(\bm{x}_i, \bm{x}_h)^2}}. \label{eq:kta1}
\end{equation}
Following Neumann et~al.~\cite{NeSc05}, we focus on only the numerator of Eq.~\eqref{eq:kta1} in view of the boundedness of the kernel function~\eqref{eq:gauss}.

The index set of data instances in each class $y \in \{-1,+1\}$ is denoted by
\[
N(y) := \{i \in [n] \mid y_i = y \}. 
\]
We also define a vector of class labels divided by each class size as 
\begin{equation}\label{eq:clslbl}
\bm{\psi} := (\psi_i)_{i \in [n]} := \left( \frac{y_i}{\lvert N(y_i) \rvert} \right)_{i \in [n]} \in \mathbb{R}^n.
\end{equation}
Then, it follows from Eqs.~\eqref{eq:kf} and \eqref{eq:clslbl} that the kernel--target alignment~\cite{NeSc05} for the class label vector $\bm{\psi}$ is expressed as 
\begin{align}
  & \sum_{i=1}^n \sum_{h=1}^n \psi_i \psi_h k(\bm{x}_i, \bm{x}_h) \notag \\
= & \left\| \frac{1}{\lvert N(-1) \rvert}\sum_{i \in N(-1)}\bm{\phi}(\bm{x}_i) - \frac{1}{\lvert N(+1) \rvert}\sum_{i \in N(+1)}\bm{\phi}(\bm{x}_i) \right\|_2^2. \label{eq:kta2}
\end{align}
This corresponds to the squared Euclidean distance between the centroids of negative and positive classes in a high-dimensional feature space $\mathcal{X}$; in other words, the class separability in a feature space can be measured by Eq.~\eqref{eq:kta2}. 
In Section~\ref{sec3}, we use the kernel--target alignment~\eqref{eq:kta2} as an objective function to be maximized for subset selection. 

\section{Mixed-integer optimization formulations for feature subset selection}\label{sec3}

In this section, we present our MIO formulations based on the kernel--target alignment~\eqref{eq:kta2} for feature subset selection.  
We also apply some problem reduction techniques to our MIO formulations. 

\subsection{Integer nonlinear optimization formulation}\label{sec3-1}

For subset selection, we assume that all features are standardized as
\begin{equation}\label{eq:std}
\sum_{i=1}^n x_{ij} = 0 \quad \mathrm{and} \quad \frac{\sum_{i=1}^n x_{ij}^2}{n} = 1
\end{equation}
for all $j \in [p]$.

Let $\bm{z} := (z_j)_{j \in [p]} \in \{0,1\}^p$ be a vector composed of binary decision variables for subset selection; namely, $z_j = 1$ if the $j$th feature is selected, and $z_j = 0$ otherwise. 
We then consider the following subset-based Gaussian kernel function:
\begin{equation}\label{eq:sgk}
k_{\bm{z}}(\bm{x}, \bm{x}') := \exp\left( -\gamma \sum_{j=1}^p z_j (x_j - x'_j)^2 \right).
\end{equation}

To select the best subset of features for kernel SVM classification, we maximize the kernel--target alignment~\eqref{eq:kta2} based on the subset-based Gaussian kernel function~\eqref{eq:sgk}. 
This problem is formulated as the following INLO problem:
\begin{align}
\mathop{\mbox{maximize}}_{} & \quad \sum_{i=1}^n \sum_{h=1}^n \psi_i \psi_h \exp\left( -\gamma \sum_{j=1}^p z_j (x_{ij} - x_{hj})^2 \right) \label{obj:INLO} \\
\mbox{subject~to} 
& \quad \sum_{j=1}^p z_j \le \theta, \label{con1:INLO} \\
& \quad \bm{z} \in \{0,1\}^p, \label{con2:INLO}
\end{align}
where $\theta \in [p]$ is a user-defined subset size parameter. 

However, a globally optimal solution to problem~\eqref{obj:INLO}--\eqref{con2:INLO} is very difficult to compute because its objective function~\eqref{obj:INLO} is a nonlinear nonconvex function.

\subsection{Mixed-integer linear optimization formulations}\label{sec3-2}

Theorem~\ref{thm1} states that the INLO problem~\eqref{obj:INLO}--\eqref{con2:INLO} can be reformulated as the following MILO problem:  
\begin{align}
\mathop{\mbox{maximize}}_{} & \quad \sum_{i=1}^n \sum_{h=1}^n \psi_i \psi_h e_{ih,p+1} \label{obj:MILO1} \\
\mbox{subject~to} 
& \quad \sum_{j=1}^p z_j \le \theta, \label{con1:MILO1} \\
& \quad e_{ih1} = 1 \quad (i \in [n], h\in [n]), \label{con2:MILO1} \\
& \quad z_j = 0 ~\Rightarrow~ e_{ih,j+1} = e_{ihj} \quad (i \in [n], h \in [n], j \in [p]), \label{con3:MILO1} \\
& \quad z_j = 1 ~\Rightarrow~ e_{ih,j+1} = \exp\left(-\gamma(x_{ij} - x_{hj})^2\right) \cdot e_{ihj} \notag \\
& \quad \qquad (i \in [n], h \in [n], j \in [p]), \label{con4:MILO1} \\
& \quad \bm{e} \in \mathbb{R}_{+}^{n \times n \times (p+1)},~\bm{z} \in \{0,1\}^p, \label{con5:MILO1}
\end{align}
where $\bm{e} := (e_{ihj})_{(i,h,j) \in [n] \times [n] \times [p+1]} \in \mathbb{R}_{+}^{n \times n \times (p+1)}$ is an array of auxiliary nonnegative decision variables. 
Here, Eqs.~\eqref{con3:MILO1}~and~\eqref{con4:MILO1} are logical implications, which can be imposed by using indicator constraints implemented in modern optimization software. 

\begin{theorem}[]\label{thm1}
Let $(\bm{e}^*,\bm{z}^*)$ be an optimal solution to the MILO problem~\eqref{obj:MILO1}--\eqref{con5:MILO1}. 
Then, $\bm{z}^*$ is also an optimal solution to the INLO problem~\eqref{obj:INLO}--\eqref{con2:INLO}. 
\end{theorem}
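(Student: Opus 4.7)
The plan is to show that the MILO constraints~\eqref{con2:MILO1}--\eqref{con5:MILO1} uniquely determine the auxiliary array $\bm{e}$ from the binary vector $\bm{z}$, and that the resulting value of $e_{ih,p+1}$ is precisely the summand of the INLO objective~\eqref{obj:INLO}. Once this is established, the two feasible regions are in value-preserving bijection via their $\bm{z}$-components, which immediately yields the optimality correspondence.

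First, I would fix any feasible $\bm{z}$ for the MILO problem (equivalently, for the INLO problem, since the cardinality constraint~\eqref{con1:INLO} coincides with~\eqref{con1:MILO1}) and prove by induction on $j \in \{1,2,\ldots,p+1\}$ that
\begin{equation*}
e_{ihj} \;=\; \exp\!\left(-\gamma \sum_{j'=1}^{j-1} z_{j'} (x_{ij'} - x_{hj'})^2 \right) \qquad (i,h \in [n]).
\end{equation*}
The base case $j=1$ follows from~\eqref{con2:MILO1}, since the empty sum in the exponent gives $\exp(0)=1$. For the inductive step, note that since $z_j \in \{0,1\}$, exactly one of the two indicator implications~\eqref{con3:MILO1} or~\eqref{con4:MILO1} is active: if $z_j=0$, then $e_{ih,j+1}=e_{ihj}$ and the exponent picks up the zero contribution $z_j(x_{ij}-x_{hj})^2 = 0$; if $z_j=1$, then $e_{ih,j+1} = \exp(-\gamma(x_{ij}-x_{hj})^2)\cdot e_{ihj}$ and the exponent picks up exactly $(x_{ij}-x_{hj})^2$. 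Either way the formula propagates from $j$ to $j+1$.

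Setting $j = p+1$ then yields $e_{ih,p+1} = \exp(-\gamma \sum_{j=1}^p z_j(x_{ij}-x_{hj})^2)$, so the MILO objective~\eqref{obj:MILO1} equals the INLO objective~\eqref{obj:INLO} evaluated at $\bm{z}$. Conversely, for any $\bm{z}$ feasible in INLO, defining $\bm{e}$ by the displayed formula produces a feasible MILO point (nonnegativity and all three constraint families are immediate) with the same objective value. Therefore the MILO optimum and the INLO optimum coincide, and the projection $(\bm{e}^*,\bm{z}^*)\mapsto \bm{z}^*$ of any MILO optimum is an INLO optimum, which is the conclusion of Theorem~\ref{thm1}.

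There is no real obstacle in this argument; the only point that needs some care is the status of the indicator constraints~\eqref{con3:MILO1}--\eqref{con4:MILO1}. Because $z_j$ is binary, for each $j$ exactly one of the two implications is triggered and determines $e_{ih,j+1}$ as a function of $e_{ihj}$, so the recursion is well-defined and the induction goes through cleanly. The nonnegativity requirement in~\eqref{con5:MILO1} is automatically satisfied since the formula produces values in $(0,1]$.
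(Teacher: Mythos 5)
Your proposal is correct and follows essentially the same route as the paper's proof: both arguments observe that, for any binary $\bm{z}$, the constraints~\eqref{con2:MILO1}--\eqref{con4:MILO1} determine $\bm{e}$ recursively and yield $e_{ih,p+1} = \exp\bigl(-\gamma \sum_{j=1}^p z_j (x_{ij}-x_{hj})^2\bigr)$, so the two problems have the same feasible $\bm{z}$-region and matching objective values. Your explicit induction and the remark on the automatic nonnegativity are just slightly more detailed packaging of the paper's recursive substitution argument.
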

\begin{proof}
For each $\bm{z} \in \{0,1\}^p$, we can construct a feasible solution $(\bm{e},\bm{z})$ to problem~\eqref{obj:MILO1}--\eqref{con5:MILO1} by using Eqs.~\eqref{con3:MILO1}--\eqref{con4:MILO1} recursively from Eq.~\eqref{con2:MILO1}. 
Therefore, the feasible region of $\bm{z}$ in problem~\eqref{obj:MILO1}--\eqref{con5:MILO1} is the same as in problem~\eqref{obj:INLO}--\eqref{con2:INLO}. 

Consequently, it is necessary to prove only that, in Eqs.~\eqref{obj:INLO} and \eqref{obj:MILO1},
\begin{equation}\notag 
e_{ih,p+1} = \exp\left( -\gamma \sum_{j=1}^p z_j (x_{ij} - x_{hj})^2 \right) 
\end{equation}
for all $(i,h) \in [n] \times [n]$. 
Note that
\begin{align}
\exp\left( -\gamma z_j (x_{ij} - x_{hj})^2 \right) =
\begin{cases}
\exp(0) = 1 & \mathrm{if}~z_j = 0, \\ 
\exp\left( -\gamma (x_{ij} - x_{hj})^2 \right) & \mathrm{if}~z_j = 1.
\end{cases}\notag
\end{align}
Therefore, the constraints~\eqref{con3:MILO1}~and~\eqref{con4:MILO1} can be integrated into 
\[
e_{ih,j+1} = \exp\left(-\gamma z_j (x_{ij} - x_{hj})^2\right) \cdot e_{ihj}
\]
for all $j \in [p]$. 
By substituting this equation recursively, we obtain
\begin{align}
e_{ih,p+1} 
& = \prod_{j=1}^p \exp\left( -\gamma z_j (x_{ij} - x_{hj})^2 \right) \quad \because \mathrm{Eq.~\eqref{con2:MILO1}} \notag \\
& = \exp\left( -\gamma \sum_{j=1}^p z_j (x_{ij} - x_{hj})^2 \right), \notag 
\end{align}
which completes the proof.  
\end{proof}

Note also that problem~\eqref{obj:MILO1}--\eqref{con5:MILO1} can be equivalently rewritten without logical implications as 
\begin{align}
\mathop{\mbox{maximize}}_{} & \quad \sum_{i=1}^n \sum_{h=1}^n \psi_i \psi_h e_{ih,p+1} \label{obj:MILO2} \\
\mbox{subject~to} 
& \quad \sum_{j=1}^p z_j \le \theta, \label{con1:MILO2} \\
& \quad e_{ih1} = 1 \quad (i \in [n], h \in [n]), \label{con2:MILO2} \\
& \quad -M z_j \le e_{ih,j+1} - e_{ihj} \le M z_j \quad (i \in [n], h \in [n], j \in [p]), \label{con3:MILO2} \\
& \quad -M \cdot (1 - z_j) \le e_{ih,j+1} - \exp\left(-\gamma(x_{ij} - x_{hj})^2\right) \cdot e_{ihj} \notag \\ 
& \quad \qquad \le M \cdot (1 - z_j) \quad (i \in [n], h \in [n], j \in [p]), \label{con4:MILO2} \\
& \quad \bm{e} \in \mathbb{R}_{+}^{n \times n \times (p+1)},~\bm{z} \in \{0,1\}^p, \label{con5:MILO2}
\end{align}
where $M \in \mathbb{R}_+$ is a sufficiently large positive constant (e.g., $M = 1$ due to Eqs.~\eqref{con2:MILO1}--\eqref{con4:MILO1}).

\subsection{Problem reduction}\label{sec3-3}

Let us define the following index sets of instance pairs:
\begin{align}
H &:= \{(i,h) \in [n] \times [n] \mid i < h\}, \notag \\
H_{+} &:= \{(i,h) \in [n] \times [n] \mid i < h,~\psi_i \psi_h > 0\}, \notag \\
H_{-} &:= \{(i,h) \in [n] \times [n] \mid i < h,~\psi_i \psi_h < 0\}. \notag
\end{align}
Then, Theorem~\ref{thm2} proves that the MILO problem~\eqref{obj:MILO2}--\eqref{con5:MILO2} can be reduced to the following MILO problem:
\begin{align}
\mathop{\mbox{maximize}}_{} & \quad \sum_{(i,h) \in H} \psi_i \psi_h e_{ih,p+1} \label{obj:MILO3} \\
\mbox{subject~to} 
& \quad \sum_{j=1}^p z_j \le \theta, \label{con1:MILO3} \\
& \quad e_{ih1} = 1 \quad ((i,h) \in H), \label{con2:MILO3} \\
& \quad e_{ih,j+1} - e_{ihj} \le 0 \quad ((i,h) \in H_{+}, j \in [p]), \label{con3:MILO3} \\
& \quad -M_{ihj} z_j \le e_{ih,j+1} - e_{ihj} \quad ((i,h) \in H_{-}, j \in [p]), \label{con4:MILO3} \\
& \quad e_{ih,j+1} - \exp\left(-\gamma(x_{ij} - x_{hj})^2\right) \cdot e_{ihj} \le M_{ihj} \cdot (1 - z_j) \notag \\ 
& \quad \qquad ((i,h) \in H_{+}, j \in [p]), \label{con5:MILO3} \\
& \quad 0 \le e_{ih,j+1} - \exp\left(-\gamma(x_{ij} - x_{hj})^2\right) \cdot e_{ihj} \notag \\ 
& \quad \qquad ((i,h) \in H_{-}, j \in [p]), \label{con6:MILO3} \\
& \quad \bm{e} \in \mathbb{R}_{+}^{\lvert H \rvert \times (p+1)},~\bm{z} \in \{0,1\}^p, \label{con7:MILO3}
\end{align}
where 
\begin{equation}\label{eq:bigM}
M_{ihj} := 1 - \exp\left(-\gamma(x_{ij} - x_{hj})^2\right) \quad (i \in [n], h \in [n], j \in [p]).
\end{equation}

\begin{theorem}[]\label{thm2}
Let $(\bm{e}^*,\bm{z}^*)$ be an optimal solution to the reduced MILO problem~\eqref{obj:MILO3}--\eqref{con7:MILO3}. 
Then, $\bm{z}^*$ is also an optimal solution to the INLO problem~\eqref{obj:INLO}--\eqref{con2:INLO}. 
\end{theorem}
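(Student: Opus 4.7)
The plan is to deduce Theorem~\ref{thm2} from Theorem~\ref{thm1} by showing that the reduced MILO formulation~\eqref{obj:MILO3}--\eqref{con7:MILO3} shares the same set of optimal $\bm{z}$ vectors as the full MILO formulation~\eqref{obj:MILO1}--\eqref{con5:MILO1}. Two reductions must be justified: restricting the index set of pairs from $[n]\times[n]$ down to $H$, and weakening the two-sided big-$M$ constraints to the pair-specific one-sided constraints~\eqref{con3:MILO3}--\eqref{con6:MILO3} with the tightened coefficient $M_{ihj}=1-\exp(-\gamma(x_{ij}-x_{hj})^2)$.

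For the index-set reduction, I would argue by symmetry of the recursion~\eqref{con2:MILO1}--\eqref{con4:MILO1} in $(i,h)$ and by noting that $e_{ii,p+1}$ is forced to equal $1$ regardless of $\bm{z}$, since $x_{ij}-x_{ij}=0$. Splitting the double sum in~\eqref{obj:MILO1}, the diagonal contributes the constant $\sum_{i\in[n]}\psi_i^2$, and the off-diagonal terms pair up to give $2\sum_{(i,h)\in H}\psi_i\psi_h e_{ih,p+1}$; hence maximizing~\eqref{obj:MILO3} over $H$ is equivalent to maximizing~\eqref{obj:MILO1} up to an additive constant and a positive scaling, and the feasible region of $\bm{z}$ is unchanged.

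For the constraint reduction, I would exploit the fact that each auxiliary variable $e_{ih,j+1}$ appears linearly in the objective with a sign determined by whether $(i,h)\in H_+$ or $(i,h)\in H_-$. When $(i,h)\in H_+$, the maximizer pushes $e_{ih,j+1}$ to its largest admissible value, so only upper bounds need to be retained; writing $d_{ihj}=x_{ij}-x_{hj}$, the two subcases $z_j=0$ (where~\eqref{con3:MILO3} is tight at $e_{ihj}$ while~\eqref{con5:MILO3} slackens to $\exp(-\gamma d_{ihj}^2)e_{ihj}+M_{ihj}\ge e_{ihj}$, using $e_{ihj}\le 1$ and the definition of $M_{ihj}$) and $z_j=1$ (where~\eqref{con5:MILO3} tightens to the correct multiplicative update and~\eqref{con3:MILO3} is redundant) together force $e_{ih,j+1}=\exp(-\gamma z_j d_{ihj}^2)\,e_{ihj}$ at any optimum. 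A mirror-image argument using the lower-bound constraints~\eqref{con4:MILO3}~and~\eqref{con6:MILO3} handles $(i,h)\in H_-$, where the minimizing direction is favoured. Unrolling the recurrence from $e_{ih1}=1$ recovers $e^*_{ih,p+1}=\exp\bigl(-\gamma\sum_{j\in[p]} z^*_j(x_{ij}-x_{hj})^2\bigr)$ exactly as in the proof of Theorem~\ref{thm1}, so the objective collapses to the INLO objective~\eqref{obj:INLO} and the conclusion follows.

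The main obstacle I anticipate is verifying that the tightened coefficient $M_{ihj}$ is simultaneously large enough to leave the relaxed constraint genuinely inactive when it should be slack and small enough to bind when it should bind; this reduces to maintaining the inductive invariant $0\le e_{ihj}\le 1$ along the recursion and combining it with $\exp(-\gamma d_{ihj}^2)\in(0,1]$, then dispatching each of the four $(H_\pm,\,z_j\in\{0,1\})$ cases.
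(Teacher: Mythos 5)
Your proposal is correct and follows essentially the same route as the paper: reduce to Theorem~\ref{thm1}, use the symmetry of the double sum to drop the diagonal constant and restrict to $H$, and then justify the one-sided constraints with the tightened coefficients $M_{ihj}=1-\exp(-\gamma(x_{ij}-x_{hj})^2)$ by the sign of the objective coefficient together with the invariant $0\le e_{ihj}\le 1$ along the recursion. Your explicit $(H_{\pm},\,z_j\in\{0,1\})$ case analysis is just a more detailed rendering of the paper's ``tighten the big-$M$ bounds, then discard the inequalities that are redundant at the optimum'' argument.
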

\begin{proof}
Because of Theorem~\ref{thm1}, it is necessary to prove only that problem~\eqref{obj:MILO2}--\eqref{con5:MILO2}, which is equivalent to problem~\eqref{obj:MILO1}--\eqref{con5:MILO1}, can be reformulated as problem~\eqref{obj:MILO3}--\eqref{con7:MILO3}. 

We begin by focusing on the objective function~\eqref{obj:MILO2}. 
Note that Eq.~\eqref{obj:INLO} can be decomposed as 
\begin{align}
  & \sum_{i=1}^n \sum_{h=1}^n \psi_i \psi_h \exp\left( -\gamma \sum_{j=1}^p z_j (x_{ij} - x_{hj})^2 \right) \notag \\
= & \sum_{i=1}^n \psi_i^2 + 2 \sum_{(i,h) \in H} \psi_i \psi_h \exp\left( -\gamma \sum_{j=1}^p z_j (x_{ij} - x_{hj})^2 \right). \notag
\end{align}
This implies that the objective function~\eqref{obj:MILO2} can be replaced with Eq.~\eqref{obj:MILO3}. 
Accordingly, the unnecessary decision variables (i.e., $e_{ihj}$ for $(i,h) \not\in H$) and the corresponding subset of constraints~\eqref{con2:MILO2}--\eqref{con4:MILO2} can be deleted from the problem. 

Next, we consider constraints~\eqref{con3:MILO2}~and~\eqref{con4:MILO2}. 
It is clear from Eqs.~\eqref{con2:MILO1}--\eqref{con4:MILO1} that 
\[
0 \le \exp\left(-\gamma(x_{ij} - x_{hj})^2\right) \cdot e_{ihj} \le e_{ih,j+1} \le e_{ihj} \le 1. 
\]
Therefore, it follows that
\begin{align}
& -M_{ihj} \le -\underbrace{\left(1 - \exp\left(-\gamma(x_{ij} - x_{hj})^2\right)\right)}_{M_{ihj}} \cdot e_{ihj} \le e_{ih,j+1} - e_{ihj} \le 0, \notag \\
& 0 \le e_{ih,j+1} - \exp\left(-\gamma(x_{ij} - x_{hj})^2\right) \cdot e_{ihj}
\le \underbrace{\left(1 - \exp\left(-\gamma(x_{ij} - x_{hj})^2\right)\right)}_{M_{ihj}} \cdot e_{ihj}
\le M_{ihj}. \notag
\end{align}
This implies that the feasible region remains the same even if the constraints~\eqref{con3:MILO2} and~\eqref{con4:MILO2} are tightened as
\begin{align}
& -M_{ihj} z_j \le e_{ih,j+1} - e_{ihj} \le 0, \label{eq:con3} \\
& 0 \le e_{ih,j+1} - \exp\left(-\gamma(x_{ij} - x_{hj})^2\right) \cdot e_{ihj} \le M_{ihj} \cdot (1 - z_j). \label{eq:con4}
\end{align} 

When $\psi_i \psi_h > 0$, the left inequalities in Eqs.~\eqref{eq:con3}~and~\eqref{eq:con4} are redundant because $e_{ih,j+1}$ is maximized by the objective function~\eqref{obj:MILO2}. 
Similarly, when $\psi_i \psi_h < 0$, the right inequalities in Eqs.~\eqref{eq:con3}~and~\eqref{eq:con4} are redundant. 
As a result, constraints~\eqref{con3:MILO3}--\eqref{con6:MILO3} are obtained, thus completing the proof. 
\end{proof}

We conclude this section by highlighting the differences between the MILO problem~\eqref{obj:MILO2}--\eqref{con5:MILO2} and its reduced version~\eqref{obj:MILO3}--\eqref{con7:MILO3}. 
The number of continuous decision variables is reduced from $(p+1)n^2$ (Eq.~\eqref{con5:MILO2}) to $(p+1)n(n-1)/2$ (Eq.~\eqref{con7:MILO3}), and the number of inequality constraints is reduced from $4pn^2$ (Eqs.~\eqref{con3:MILO2} and \eqref{con4:MILO2}) to $pn(n-1)$ (Eqs.~\eqref{con3:MILO3}--\eqref{con6:MILO3}). 
Also, the big-$M$ values are equal (e.g., $M=1$) in Eqs.~\eqref{con3:MILO2} and \eqref{con4:MILO2}, whereas they are set to the smaller values~\eqref{eq:bigM} in Eqs.~\eqref{con4:MILO3} and \eqref{con5:MILO3}. 

\section{Computational experiments}\label{sec4}
In this section, we report the results of computations to evaluate the efficacy of our method for feature subset selection in kernel SVM classification.
First, we confirm the computational efficiency of our MIO formulations using real-world datasets, and then we examine the prediction performance of our method for feature subset selection using synthetic datasets. 

All computations were performed on a Windows computer with two Intel Xeon E5-2620v4 CPUs (2.10\,GHz) and 128\,GB of memory using a single thread.

\subsection{Experimental design for real-world datasets}\label{sec4a}

We downloaded four real-world datasets for classification tasks from the UCI Machine Learning Repository~\cite{DuGr19}. 
Table~\ref{tab:DataSet} lists the datasets, where $n$ and $p$ are the numbers of data instances and candidate features, respectively. 
Categorical variables with two categories were treated as dummy variables, and those with more than two categories were transformed into sets of dummy variables. 
In the Zoo and Parkinsons datasets, the names of data instances were deleted. 
In the Hepatitis dataset, we removed four variables containing more than 10 missing values, and then data instances containing missing values. 
In the Soybean dataset, variables with the same value in all data instances were eliminated.
The Zoo and Soybean datasets have multiple response classes, so the positive label (i.e., $y_i = +1$) was given to classes 1 and 2 in the Zoo dataset and to classes D1 and D4 in the Soybean dataset, and the negative label (i.e., $y_i = -1$) was given to the other classes.  

\begin{table}[htb]
\begin{center}
\caption{Real-world datasets}
\begin{tabular}{lrrl}\toprule
            Name & $n$ & $p$ & Original dataset~\cite{DuGr19} \\ \midrule
Hepatitis    & 138 & 15  & Hepatitis \\
Zoo          & 101 & 16  & Zoo \\
Parkinsons   & 195 & 22  & Parkinsons \\
Soybean      &  47 & 45  & Soybean (Small) \\
\bottomrule
\end{tabular}
\label{tab:DataSet}
\end{center}
\end{table}

We compare the computational efficiency of the following MIO formulations for feature subset selection in kernel SVM classification:  
\begin{description}
\item{\textbf{INLO-K}:} INLO formulation~\eqref{obj:INLO}--\eqref{con2:INLO}; 
\item{\textbf{MILO-K}:} MILO formulation~\eqref{obj:MILO2}--\eqref{con5:MILO2} with $M=1$;
\item{\textbf{RMILO-K}:} reduced MILO formulation~\eqref{obj:MILO3}--\eqref{con7:MILO3}.
\end{description}
The MILO problems were solved using the optimization software IBM ILOG CPLEX 20.1.0.0~\cite{ib22}, where algorithms for solving relaxed subproblems on each node were set to the interior-point method instead of the dual simplex method.
To increase numerical stability, the big-$M$ values for RMILO-K were set as 
\begin{equation}\label{eq:bigM2}
M_{ihj} = \min\{1 - \exp\left(-\gamma(x_{ij} - x_{hj})^2\right) + 0.1, 1.0\} \quad (i \in [n], h \in [n], j \in [p]).
\end{equation} 
The INLO problem, which cannot be handled by CPLEX because of its nonlinear objective function, was solved by the optimization software Gurobi Optimizer 9.5.0~\cite{Gu20} using the general constraint {\tt EXP} function.

Many algorithms have been proposed for tuning SVM hyperparameters~\cite{WaFo21}.
Based on the sigest method~\cite{CaSi02}, we estimated an appropriate value of the scaling parameter $\gamma$ in the subset-based Gaussian kernel function~\eqref{eq:sgk} as follows:
\begin{equation}\label{eq:sigest}
\hat{\gamma} := \frac{1}{\mathrm{median~of~}\left\{(\theta/p) \cdot \sum_{j=1}^p (x_{ij} - x_{hj})^2 \mid (i,h) \in H \right\}}.
\end{equation}
We then set $\gamma = \beta \hat{\gamma}$ with the scaling factor $\beta \in \{0.25, 1.00, 4.00\}$. 

The following column labels are used in Tables~\ref{tab:RW_Hep}--\ref{tab:Syn_100_5}: 
\begin{description}
\item{\textbf{ObjVal}:} value of the objective function~\eqref{obj:INLO};
\item{\textbf{OptGap}:} absolute difference between lower and upper bounds on the optimal objective value divided by the lower bound;
\item{$\bm{\lvert \hat{S} \rvert}$:} subset size of selected features;
\item{\textbf{Time}:} computation time in seconds.
\end{description}
A computation was terminated if it did not complete within 10000\,s; in those cases, the best feasible solution found within 10000\,s was taken as the result.

\subsection{Results for real-world datasets}\label{sec4b}

Tables~\ref{tab:RW_Hep}--\ref{tab:RW_Soy} give the computational results of the three MIO formulations for the real-world datasets. 
First, we focus on the results for the Hepatitis dataset (Table~\ref{tab:RW_Hep}). 
The INLO formulation (INLO-K) always reached the time limit of 10000\,s, and therefore its ObjVal values were often very small. 
In contrast, our reduced MILO formulation (RMILO-K) solved all the problem instances completely within the time limit. 
Moreover, RMILO-K finished computations much sooner than did the original MILO formulation (MILO-K); for example, the computation times of MILO-K and RMILO-K for $(\theta,\beta)=(3,0.25)$ were 9498.3\,s and 716.6\,s, respectively. 

For the Zoo dataset (Table~\ref{tab:RW_Zoo}), RMILO-K was still much faster than the other formulations, whereas the differences in ObjVal among the three formulations were relatively small. 
For the Parkinsons dataset (Table~\ref{tab:RW_Park}), although the three formulations failed to finish computations within the time limit, RMILO-K often attained the largest ObjVal and smallest OptGap values. 
For the Soybean dataset (Table~\ref{tab:RW_Soy}), INLO-K and MILO-K often reached the time limit, whereas RMILO-K solved all the problem instances to optimality. 
These results show that our MILO formulations offer clear computational advantages over the INLO formulation, and the problem reduction offers highly accelerated MILO computations. 

Next, we examine how the two user-defined parameters $(\theta,\beta)$ affected the MILO computations. 
The computation time of RMILO-K was longer when $\theta$ was large; this is reasonable because the number of feasible subsets of features increases with $\theta$. 
Also, the computation time of RMILO-K was often longest with $\beta = 1.00$; this implies that solving MILO problems is computationally expensive when the scaling parameter $\gamma$ is tuned appropriately by Eq.~\eqref{eq:sigest}. 

\subsection{Experimental design for synthetic datasets}\label{sec4c}

We prepared synthetic datasets based on the MADELON dataset~\cite{GuGu04} from the NIPS 2003 Feature Selection Challenge. 
Specifically, we supposed that there were $\theta^*$ relevant features and $p - \theta^*$ irrelevant features. 
The relevant features were generated using the NDCC (normally distributed clusters on cubes) data generator~\cite{Th06}, which is designed to create datasets for nonlinear binary classification. 
The expansion factor ``exp'' is used in the NDCC data generator to stretch the covariance matrix of multivariate normal distributions; as the expansion factor increases, two classes overlap and become difficult to discriminate. 
The irrelevant features were drawn randomly from the standard normal distribution.
All these features were standardized as in Eq.~\eqref{eq:std}. 

We used $n$ data instances as a training dataset for each combination $(n,p,\mathrm{exp},\theta^*)$ of parameter values. 
For this training dataset, we selected a subset $\hat{S}$ of features. 
The accuracy of subset selection is measured by the F1 score, which is the harmonic average of $\mathrm{Recall} := \lvert S^* \cap \hat{S} \rvert/\lvert S^* \rvert$ and $\mathrm{Precision} :=  \lvert S^* \cap \hat{S} \rvert/\lvert \hat{S} \rvert$ as follows:  
\[
\mathbf{SetF1} := \frac{2 \cdot \mathrm{Recall} \cdot \mathrm{Precision}}{\mathrm{Recall} + \mathrm{Precision}}, 
\]
where $S^*$ is the set of relevant features. 

By means of the training dataset, we trained SVM classifiers with the selected subset $\hat{S}$ of features. 
We then evaluated the prediction performance by applying the trained classifier to a testing dataset consisting of sufficiently many data instances. 
Let $\hat{y}_i(\hat{S})$ be the class label predicted for the $i$th data instance.   
The classification accuracy for the testing dataset is calculated as 
\[
\mathbf{ClsAcc} := \frac{\lvert \{i \in \tilde{N} \mid y_i = \hat{y}_i(\hat{S})\} \rvert}{\lvert \tilde{N} \rvert},
\]
where $\tilde{N}$ is the index set of testing data instances. 
We repeated this process 10 times and give average values in Tables~\ref{tab:Syn_25_3}--\ref{tab:Syn_100_5}. 

We compare the prediction performance of the following methods for feature subset selection: 
\begin{description}
\item{\textbf{MILO-L}:} MILO formulation (MILP2~\cite{MaPe14}) for linear SVM classification;
\item{\textbf{RFE-K}:} recursive feature elimination~\cite{GuGu08} for kernel SVM classification;
\item{\textbf{RMILO-K}:} our reduced MILO formulation~\eqref{obj:MILO3}--\eqref{con7:MILO3} for kernel SVM classification.
\end{description}
The MILO formulation (MILP2~\cite{MaPe14}) was proposed for feature subset selection in linear SVM classification. 
The recursive feature elimination was implemented using the \texttt{caret} package in the R programming language.  
The MILO problems were solved using the optimization software IBM ILOG CPLEX 20.1.0.0~\cite{ib22}, where the interior-point method was used to solve relaxed subproblems.
The big-$M$ values for RMILO-K were set as in Eq.~\eqref{eq:bigM2}. 
For a selected subset $\hat{S}$ of features, SVM classifiers were trained using the {\tt sklearn.svm.LinearSVC} function (MILO-L) and the {\tt sklearn.svm.SVC} function (RFE-K and RMILO-K) in the Python programming language. 
We set the misclassification penalty parameter as $C=1$, which performed well for our synthetic datasets.  
We also used the scaling parameter $\gamma = \hat{\gamma}$, which was tuned by Eq.~\eqref{eq:sigest} for the subset-based Gaussian kernel function~\eqref{eq:sgk}. 

\subsection{Results for synthetic datasets}\label{sec4d}

Tables~\ref{tab:Syn_25_3}--\ref{tab:Syn_100_5} show the computational results of the three methods for feature subset selection for the synthetic datasets. 
Recall that the tables show average values over 10 repetitions, with standard errors of the ClsAcc and SetF1 values in parentheses, where the best ClsAcc and SetF1 values for each problem instance $(n,p,\mathrm{exp},\theta^*)$ are given in bold. 
Note also that where the tables show ``$>$10000.0'' in the column labeled ``Time,'' the computation reached the time limit of 10000\,s at least once out of 10~repetitions.

Table~\ref{tab:Syn_25_3} gives the results for the expansion factor $\mathrm{exp}=25$ and the subset size $\theta=\theta^*=3$. 
When $n=50$, our kernel-based MILO method (RMILO-K) achieved good accuracy for both classification (ClsAcc) and subset selection (SetF1). 
When $n=100$, the kernel-based recursive feature elimination (RFE-K) performed relatively well. 
On the whole, the linear-SVM-based MILO method (MILO-L) performed the worst. 

Table~\ref{tab:Syn_25_5} gives the results for the expansion factor $\mathrm{exp}=25$ and the subset size $\theta=\theta^*=5$. 
When $n=50$, RMILO-K maintained good accuracy for both classification and subset selection. 
When $n=100$, RFE-K and MILO-L had the best accuracy for classification and subset selection, respectively. 
However, RMILO-K selected fewer than half the features selected by MILO-L. 
Accordingly, it is also the case that RMILO-K delivered overall good performance with relatively few features. 

Table~\ref{tab:Syn_100_3} gives the results for the expansion factor $\mathrm{exp}=100$ and the subset size $\theta=\theta^*=3$. 
In this case, MILO-L outperformed the other kernel-based methods in terms of the classification accuracy. 
In other words, this dataset was compatible with linear SVM classifiers. 
On the other hand, the accuracy for classification and subset selection was higher for RMILO-K than for RFE-K overall.  

Table~\ref{tab:Syn_100_5} gives the results for the expansion factor $\mathrm{exp}=100$ and the subset size $\theta=\theta^*=5$. 
In this case, RMILO-K and RFE-K attained good classification accuracy when $n=50$ and $n=100$, respectively. 
As for the subset selection accuracy, although MILO-L had the overall best performance, RMILO-K with fewer features outperformed RFE-K on the whole. 

These results show that our MILO formulation delivers good prediction performance, especially when there are relatively few data instances. 
One of the main reasons for this is that the kernel--target alignment, which is the distance between the centroids of two response classes, is a performance measure that is robust against small datasets. 
Also, our MILO formulation can outperform recursive feature elimination in terms of the subset selection accuracy. 

\section{Conclusion}\label{sec5}
This paper dealt with feature subset selection for nonlinear kernel SVM classification. 
First, we introduced the INLO formulation for computing the best subset of features based on the kernel--target alignment, which is the distance between the centroids of two response classes in a high-dimensional feature space. 
Next, we reformulated the problem as a MILO problem and then devised some problem reduction techniques to solve the problem more efficiently. 

In computational experiments conducted using real-world and synthetic datasets, our MILO problems were solved in much shorter times than was the original INLO problem, and the computational efficiency was improved by our reduced MILO formulation. 
Our method often attained better classification accuracy than did the linear-SVM-based MILO formulation~\cite{MaPe14} and recursive feature elimination~\cite{GuGu08}, especially when there were relatively few data instances. 

It is known that feature subset selection for maximizing the kernel--target alignment leads to nonconvex optimization~\cite{NeSc05}. 
To our knowledge, we are the first to transform this subset selection problem into a MILO problem, which can be solved to optimality using optimization software. 
Note that if we try to solve the original nonconvex optimization problem exactly, then we cannot avoid numerical errors caused by its nonlinear objective function. 
In contrast, our method offers globally optimal solutions to small-sized problems without such numerical errors, and the obtained optimal solutions can be used to evaluate the solution quality of other algorithms. 
We also expect our formulation techniques to be applicable to other nonconvex optimization problems whose structures are similar to that of our problem. 

A future direction of study will be to develop an efficient algorithm specialized for directly solving our INLO problem~\eqref{obj:INLO}--\eqref{con2:INLO}. 
Another direction of future research will be to devise MIO formulations for feature subset selection using other kernel functions or other performance measures of kernel SVM classifiers. 

\section*{Acknowledgements}
This work was partially supported by JSPS KAKENHI Grant Numbers JP21K04526 and JP21K04527.

\clearpage

\begin{table}[!t]
\footnotesize
\begin{center}
\caption{Results for Hepatitis dataset: $(n,p)=(138,15)$}
\begin{tabular}{rrlrrrr}\toprule
$\theta$ & $\beta$ & Method & ObjVal & OptGap & $\lvert \hat{S} \rvert$ & Time \\ \midrule
3 & 0.25  & INLO-K & 0.180  & 151.1\% & 2 & $>$10000.0 \\
 &  & MILO-K & 0.236  & 0.0\% & 3 & 9498.3  \\
 &  & RMILO-K & 0.236  & 0.0\% & 3 & 716.6  \\ \cmidrule{2-7}
 & 1.00  & INLO-K & 0.389  & 190.6\% & 2 & $>$10000.0 \\
 &  & MILO-K & 0.389  & 171.6\% & 2 & $>$10000.0 \\
 &  & RMILO-K & 0.411  & 0.0\% & 3 & 2168.6  \\ \cmidrule{2-7}
 & 4.00  & INLO-K & 0.000  & $>$1000.0\% & 0 & $>$10000.0 \\
 &  & MILO-K & 0.489  & 0.0\% & 2 & 7127.1  \\
 &  & RMILO-K & 0.489  & 0.0\% & 2 & 1872.9  \\ \midrule
5 & 0.25  & INLO-K & 0.000  & 624.8\% & 0 & $>$10000.0 \\
 &  & MILO-K & 0.215  & 0.0\% & 5 & 7787.1  \\
 &  & RMILO-K & 0.215  & 0.0\% & 5 & 760.5  \\ \cmidrule{2-7}
 & 1.00  & INLO-K & 0.344  & 165.7\% & 4 & $>$10000.0 \\
 &  & MILO-K & 0.360  & 229.3\% & 5 & $>$10000.0 \\
 &  & RMILO-K & 0.392  & 0.0\% & 4 & 5135.0  \\ \cmidrule{2-7}
 & 4.00  & INLO-K & 0.406  & 278.8\% & 1 & $>$10000.0 \\
 &  & MILO-K & 0.463  & 162.7\% & 3 & $>$10000.0 \\
 &  & RMILO-K & 0.463  & 0.0\% & 2 & 2476.5  \\ \bottomrule
\end{tabular}
\label{tab:RW_Hep}
\end{center}
\end{table}

\begin{table}[!t]
\footnotesize
\begin{center}
\caption{Results for Zoo dataset: $(n,p)=(101,16)$}
\begin{tabular}{rrlrrrr}\toprule
$\theta$ & $\beta$ & Method & ObjVal & OptGap & $\lvert \hat{S} \rvert$ & Time \\ \midrule
3 & 0.25  & INLO-K & 0.303  & 0.0\% & 3 & 1624.7  \\
 &  & MILO-K & 0.303  & 0.0\% & 3 & 342.7  \\
 &  & RMILO-K & 0.303  & 0.0\% & 3 & 55.5  \\ \cmidrule{2-7}
 & 1.00  & INLO-K & 0.916  & 0.0\% & 3 & 1690.1  \\
 &  & MILO-K & 0.916  & 0.0\% & 3 & 388.4  \\
 &  & RMILO-K & 0.916  & 0.0\% & 3 & 81.5  \\ \cmidrule{2-7}
 & 4.00  & INLO-K & 1.445  & 0.0\% & 2 & 9489.9  \\
 &  & MILO-K & 1.445  & 0.0\% & 2 & 315.3  \\
 &  & RMILO-K & 1.445  & 0.0\% & 2 & 29.7  \\ \midrule
5 & 0.25  & INLO-K & 0.278  & 1.6\% & 5 & $>$10000.0 \\
 &  & MILO-K & 0.278  & 0.0\% & 5 & 379.1  \\
 &  & RMILO-K & 0.278  & 0.0\% & 5 & 89.1  \\ \cmidrule{2-7}
 & 1.00  & INLO-K & 0.657  & 32.5\% & 5 & $>$10000.0 \\
 &  & MILO-K & 0.726  & 0.0\% & 5 & 852.9  \\
 &  & RMILO-K & 0.726  & 0.0\% & 5 & 291.2  \\ \cmidrule{2-7}
 & 4.00  & INLO-K & 1.333  & 8.7\% & 3 & $>$10000.0 \\
 &  & MILO-K & 1.333  & 0.0\% & 3 & 503.3  \\
 &  & RMILO-K & 1.333  & 0.0\% & 3 & 62.1  \\ \bottomrule
\end{tabular}
\label{tab:RW_Zoo}
\end{center}
\end{table}

\begin{table}[!t]
\footnotesize
\begin{center}
\caption{Results for Parkinsons dataset: $(n,p)=(195,22)$}
\begin{tabular}{rrlrrrr}\toprule
$\theta$ & $\beta$ & Method & ObjVal & OptGap & $\lvert \hat{S} \rvert$ & Time \\ \midrule
3 & 0.25  & INLO-K & 0.000  & $>$1000.0\% & 0 & $>$10000.0 \\
 &  & MILO-K & 0.000  & $>$1000.0\% & 0 & $>$10000.0 \\
 &  & RMILO-K & 0.284  & 81.2\% & 3 & $>$10000.0 \\ \cmidrule{2-7}
 & 1.00  & INLO-K & 0.000  & $>$1000.0\% & 0 & $>$10000.0 \\
 &  & MILO-K & 0.000  & $>$1000.0\% & 0 & $>$10000.0 \\
 &  & RMILO-K & 0.316  & 206.3\% & 2 & $>$10000.0 \\ \cmidrule{2-7}
 & 4.00  & INLO-K & 0.000  & $>$1000.0\% & 0 & $>$10000.0 \\
 &  & MILO-K & 0.154  & $>$1000.0\% & 3 & $>$10000.0 \\
 &  & RMILO-K & 0.000  & $>$1000.0\% & 0 & $>$10000.0 \\ \midrule
5 & 0.25  & INLO-K & 0.000  & $>$1000.0\% & 0 & $>$10000.0 \\
 &  & MILO-K & 0.127  & 965.5\% & 3 & $>$10000.0 \\
 &  & RMILO-K & 0.251  & 88.6\% & 5 & $>$10000.0 \\ \cmidrule{2-7}
 & 1.00  & INLO-K & 0.000  & $>$1000.0\% & 0 & $>$10000.0 \\
 &  & MILO-K & 0.160  & $>$1000.0\% & 5 & $>$10000.0 \\
 &  & RMILO-K & 0.276  & 268.9\% & 2 & $>$10000.0 \\ \cmidrule{2-7}
 & 4.00  & INLO-K & 0.000  & $>$1000.0\% & 0 & $>$10000.0 \\
 &  & MILO-K & 0.158  & $>$1000.0\% & 5 & $>$10000.0 \\
 &  & RMILO-K & 0.000  & $>$1000.0\% & 0 & $>$10000.0 \\ \bottomrule
\end{tabular}
\label{tab:RW_Park}
\end{center}
\end{table}

\begin{table}[!t]
\footnotesize
\begin{center}
\caption{Results for Soybean dataset: $(n,p)=(47,45)$}
\begin{tabular}{rrlrrrr}\toprule
$\theta$ & $\beta$ & Method & ObjVal & OptGap & $\lvert \hat{S} \rvert$ & Time \\ \midrule
3 & 0.25  & INLO-K & 0.316  & 20.1\% & 3 & $>$10000.0 \\
 &  & MILO-K & 0.316  & 22.7\% & 3 & $>$10000.0 \\
 &  & RMILO-K & 0.316  & 0.0\% & 3 & 346.0  \\ \cmidrule{2-7}
 & 1.00  & INLO-K & 0.137  & $>$1000.0\% & 3 & $>$10000.0 \\
 &  & MILO-K & 0.926  & 0.0\% & 3 & 6669.9  \\
 &  & RMILO-K & 0.926  & 0.0\% & 3 & 618.2  \\ \cmidrule{2-7}
 & 4.00  & INLO-K & 1.419  & 26.3\% & 3 & $>$10000.0 \\
 &  & MILO-K & 1.451  & 0.0\% & 3 & 7055.2  \\
 &  & RMILO-K & 1.451  & 0.0\% & 3 & 144.0  \\ \midrule
5 & 0.25  & INLO-K & 0.300  & 15.1\% & 5 & $>$10000.0 \\
 &  & MILO-K & 0.267  & 547.6\% & 5 & $>$10000.0 \\
 &  & RMILO-K & 0.300  & 0.0\% & 5 & 2302.6  \\ \cmidrule{2-7}
 & 1.00  & INLO-K & 0.711  & 60.6\% & 5 & $>$10000.0 \\
 &  & MILO-K & 0.737  & 167.1\% & 5 & $>$10000.0 \\
 &  & RMILO-K & 0.870  & 0.0\% & 5 & 5054.1  \\ \cmidrule{2-7}
 & 4.00  & INLO-K & 0.966  & 84.8\% & 4 & $>$10000.0 \\
 &  & MILO-K & 1.391  & 20.9\% & 4 & $>$10000.0 \\
 &  & RMILO-K & 1.391  & 0.0\% & 4 & 414.7  \\ \bottomrule
\end{tabular}
\label{tab:RW_Soy}
\end{center}
\end{table}

\clearpage 

\begin{table}[!t]
\footnotesize
\begin{center}
\caption{Results for the synthetic dataset ($\mathrm{exp}=25$ and $\theta=\theta^*=3$)}
\begin{tabular}{rrlrrrrr}\toprule
$n$ & $p$ & Method & \multicolumn{1}{l}{ClsAcc} & \multicolumn{1}{l}{SetF1} & OptGap & $\lvert \hat{S} \rvert$ & Time \\ \midrule
50 & 10 & MILO-L & 0.867~($\pm$0.007) & \textbf{0.833}~($\pm$0.056) & 0.0\% & 3.0  & 0.1  \\
 &  & RFE-K & 0.898~($\pm$0.019) & 0.667~($\pm$0.055) & \multicolumn{1}{c}{---} & 2.3  & 13.0  \\
 &  & RMILO-K & \textbf{0.935}~($\pm$0.005) & 0.820~($\pm$0.020) & 0.0\% & 2.1  & 46.8  \\ \cmidrule{2-8}
 & 20 & MILO-L & 0.849~($\pm$0.010) & 0.700~($\pm$0.078) & 0.0\% & 3.0  & 0.2  \\
 &  & RFE-K & 0.905~($\pm$0.014) & 0.690~($\pm$0.052) & \multicolumn{1}{c}{---} & 2.1  & 13.2  \\
 &  & RMILO-K & \textbf{0.931}~($\pm$0.004) & \textbf{0.807}~($\pm$0.025) & 0.0\% & 2.2  & 785.2  \\ \cmidrule{2-8}
 & 30 & MILO-L & 0.837~($\pm$0.011) & 0.633~($\pm$0.078) & 0.0\% & 3.0  & 0.4  \\
 &  & RFE-K & 0.849~($\pm$0.036) & 0.647~($\pm$0.062) & \multicolumn{1}{c}{---} & 1.8  & 13.4  \\
 &  & RMILO-K & \textbf{0.935}~($\pm$0.006) & \textbf{0.827}~($\pm$0.032) & 5.3\% & 2.3  & $>$10000.0 \\ \midrule
100 & 10 & MILO-L & 0.866~($\pm$0.006) & 0.833~($\pm$0.056) & 0.0\% & 3.0  & 0.1  \\
 &  & RFE-K & 0.\textbf{944}~($\pm$0.009) & \textbf{0.880}~($\pm$0.033) & \multicolumn{1}{c}{---} & 2.4  & 13.0  \\
 &  & RMILO-K & 0.934~($\pm$0.010) & 0.810~($\pm$0.043) & 0.0\% & 2.1  & 378.4  \\ \cmidrule{2-8}
 & 20 & MILO-L & 0.865~($\pm$0.006) & \textbf{0.833}~($\pm$0.056) & 0.0\% & 3.0  & 0.4  \\
 &  & RFE-K & 0.922~($\pm$0.018) & 0.830~($\pm$0.047) & \multicolumn{1}{c}{---} & 2.2  & 13.2  \\
 &  & RMILO-K & \textbf{0.933}~($\pm$0.010) & 0.810~($\pm$0.043) & 0.0\% & 2.1  & 4660.3  \\ \cmidrule{2-8}
 & 30 & MILO-L & 0.866~($\pm$0.006) & \textbf{0.833}~($\pm$0.056) & 0.0\% & 3.0  & 0.4  \\
 &  & RFE-K & \textbf{0.922}~($\pm$0.018) & 0.830~($\pm$0.047) & \multicolumn{1}{c}{---} & 2.2  & 13.4  \\
 &  & RMILO-K & 0.906~($\pm$0.011) & 0.650~($\pm$0.050) & 130.9\% & 1.5  & $>$10000.0 \\ \bottomrule
\end{tabular}
\label{tab:Syn_25_3}
\end{center}
\end{table}

\begin{table}[!t]
\footnotesize
\begin{center}
\caption{Results for the synthetic dataset ($\mathrm{exp}=25$ and $\theta=\theta^*=5$)}
\begin{tabular}{rrlrrrrr}\toprule
$n$ & $p$ & Method & \multicolumn{1}{l}{ClsAcc} & \multicolumn{1}{l}{SetF1} & OptGap & $\lvert \hat{S} \rvert$ & Time \\ \midrule
50 & 10 & MILO-L & 0.872~($\pm$0.008) & \textbf{0.720}~($\pm$0.044) & 0.0\% & 5.0  & $<$0.1 \\
 &  & RFE-K & \textbf{0.885}~($\pm$0.008) & 0.618~($\pm$0.069) & \multicolumn{1}{c}{---} & 2.7  & 18.9  \\
 &  & RMILO-K & 0.875~($\pm$0.011) & 0.661~($\pm$0.030) & 0.0\% & 2.5  & 72.3  \\ \cmidrule{2-8}
 & 20 & MILO-L & 0.854~($\pm$0.008) & 0.640~($\pm$0.040) & 0.0\% & 5.0  & 0.2  \\
 &  & RFE-K & 0.860~($\pm$0.007) & 0.466~($\pm$0.048) & \multicolumn{1}{c}{---} & 2.0  & 19.1  \\
 &  & RMILO-K & \textbf{0.873}~($\pm$0.011) & \textbf{0.661}~($\pm$0.030) & 0.0\% & 2.5  & 3472.5  \\ \cmidrule{2-8}
 & 30 & MILO-L & 0.849~($\pm$0.007) & 0.600~($\pm$0.030) & 0.0\% & 5.0  & 0.7  \\
 &  & RFE-K & 0.860~($\pm$0.007) & 0.466~($\pm$0.048) & \multicolumn{1}{c}{---} & 2.0  & 19.3  \\
 &  & RMILO-K & \textbf{0.871}~($\pm$0.011) & \textbf{0.643}~($\pm$0.029) & 74.6\% & 2.4  & $>$10000.0 \\ \midrule
100 & 10 & MILO-L & 0.892~($\pm$0.004) & \textbf{0.880}~($\pm$0.033) & 0.0\% & 5.0  & 0.1  \\
 &  & RFE-K & \textbf{0.896}~($\pm$0.010) & 0.643~($\pm$0.062) & \multicolumn{1}{c}{---} & 2.8  & 18.9  \\
 &  & RMILO-K & 0.886~($\pm$0.010) & 0.643~($\pm$0.029) & 0.0\% & 2.4  & 524.5  \\ \cmidrule{2-8}
 & 20 & MILO-L & 0.880~($\pm$0.006) & \textbf{0.780}~($\pm$0.047) & 0.0\% & 5.0  & 0.4  \\
 &  & RFE-K & \textbf{0.890}~($\pm$0.010) & 0.557~($\pm$0.063) & \multicolumn{1}{c}{---} & 2.2  & 19.1  \\
 &  & RMILO-K & 0.879~($\pm$0.009) & 0.625~($\pm$0.027) & 72.0\% & 2.3  & $>$10000.0 \\ \cmidrule{2-8}
 & 30 & MILO-L & 0.872~($\pm$0.007) & \textbf{0.640}~($\pm$0.027) & 0.0\% & 5.0  & 1.2  \\
 &  & RFE-K & \textbf{0.887}~($\pm$0.009) & 0.538~($\pm$0.053) & \multicolumn{1}{c}{---} & 2.9  & 19.3  \\
 &  & RMILO-K & 0.864~($\pm$0.003) & 0.540~($\pm$0.024) & 187.5\% & 2.0  & $>$10000.0 \\ \bottomrule
\end{tabular}
\label{tab:Syn_25_5}
\end{center}
\end{table}

\clearpage

\begin{table}[!t]
\footnotesize
\begin{center}
\caption{Results for the synthetic dataset ($\mathrm{exp}=100$ and $\theta=\theta^*=3$)}
\begin{tabular}{rrlrrrrr}\toprule
$n$ & $p$ & Method & \multicolumn{1}{l}{ClsAcc} & \multicolumn{1}{l}{SetF1} & OptGap & $\lvert \hat{S} \rvert$ & Time \\ \midrule
50 & 10 & MILO-L & \textbf{0.800}~($\pm$0.012) & 0.833~($\pm$0.056) & 0.0\% & 3.0  & 0.1  \\
 &  & RFE-K & 0.720~($\pm$0.019) & 0.687~($\pm$0.044) & \multicolumn{1}{c}{---} & 2.6  & 13.1  \\
 &  & RMILO-K & 0.757~($\pm$0.025) & \textbf{0.853}~($\pm$0.066) & 0.0\% & 2.6  & 88.3  \\ \cmidrule{2-8}
 & 20 & MILO-L & \textbf{0.774}~($\pm$0.023) & 0.767~($\pm$0.071) & 0.0\% & 3.0  & 0.2  \\
 &  & RFE-K & 0.707~($\pm$0.022) & 0.640~($\pm$0.055) & \multicolumn{1}{c}{---} & 2.7  & 13.3  \\
 &  & RMILO-K & 0.746~($\pm$0.025) & \textbf{0.820}~($\pm$0.066) & 0.0\% & 2.6  & 1807.0  \\ \cmidrule{2-8}
 & 30 & MILO-L & \textbf{0.757}~($\pm$0.021) & \textbf{0.700}~($\pm$0.060) & 0.0\% & 3.0  & 0.4  \\
 &  & RFE-K & 0.699~($\pm$0.023) & 0.637~($\pm$0.059) & \multicolumn{1}{c}{---} & 2.4  & 13.5  \\
 &  & RMILO-K & 0.699~($\pm$0.020) & 0.653~($\pm$0.057) & 226.9\% & 2.6  & $>$10000.0 \\ \midrule
100 & 10 & MILO-L & \textbf{0.816}~($\pm$0.008) & 0.867~($\pm$0.054) & 0.0\% & 3.0  & 0.1  \\
 &  & RFE-K & 0.793~($\pm$0.009) & 0.820~($\pm$0.043) & \multicolumn{1}{c}{---} & 2.6  & 13.1  \\
 &  & RMILO-K & 0.808~($\pm$0.006) & \textbf{0.920}~($\pm$0.033) & 0.0\% & 2.6  & 645.2  \\ \cmidrule{2-8}
 & 20 & MILO-L & \textbf{0.810}~($\pm$0.008) & 0.833~($\pm$0.056) & 0.0\% & 3.0  & 0.5  \\
 &  & RFE-K & 0.785~($\pm$0.008) & 0.780~($\pm$0.032) & \multicolumn{1}{c}{---} & 2.4  & 13.3  \\
 &  & RMILO-K & 0.802~($\pm$0.008) & \textbf{0.887}~($\pm$0.040) & 134.8\% & 2.6  & $>$10000.0 \\ \cmidrule{2-8}
 & 30 & MILO-L & \textbf{0.807}~($\pm$0.008) & \textbf{0.800}~($\pm$0.054) & 0.0\% & 3.0  & 0.3  \\
 &  & RFE-K & 0.774~($\pm$0.008) & 0.733~($\pm$0.022) & \multicolumn{1}{c}{---} & 2.5  & 13.5  \\
 &  & RMILO-K & 0.726~($\pm$0.028) & 0.687~($\pm$0.072) & 709.0\% & 2.1  & $>$10000.0 \\ \bottomrule
\end{tabular}
\label{tab:Syn_100_3}
\end{center}
\end{table}

\begin{table}[!t]
\footnotesize
\begin{center}
\caption{Results for the synthetic dataset ($\mathrm{exp}=100$ and $\theta=\theta^*=5$)}
\begin{tabular}{rrlrrrrr}\toprule
$n$ & $p$ & Method & \multicolumn{1}{l}{ClsAcc} & \multicolumn{1}{l}{SetF1} & OptGap & $\lvert \hat{S} \rvert$ & Time \\ \midrule
50 & 10 & MILO-L & 0.793~($\pm$0.008) & \textbf{0.700}~($\pm$0.061) & 0.0\% & 5.0  & $<$0.1 \\
 &  & RFE-K & 0.809~($\pm$0.007) & 0.535~($\pm$0.047) & \multicolumn{1}{c}{---} & 3.0  & 19.2  \\
 &  & RMILO-K & \textbf{0.822}~($\pm$0.005) & 0.582~($\pm$0.020) & 0.0\% & 2.2  & 138.7  \\ \cmidrule{2-8}
 & 20 & MILO-L & 0.780~($\pm$0.007) & \textbf{0.620}~($\pm$0.070) & 0.0\% & 5.0  & 0.2  \\
 &  & RFE-K & 0.805~($\pm$0.008) & 0.498~($\pm$0.040) & \multicolumn{1}{c}{---} & 2.6  & 19.4  \\
 &  & RMILO-K & \textbf{0.820}~($\pm$0.006) & 0.575~($\pm$0.022) & 178.3\% & 2.3  & $>$10000.0 \\ \cmidrule{2-8}
 & 30 & MILO-L & 0.770~($\pm$0.008) & 0.520~($\pm$0.053) & 0.0\% & 5.0  & 1.1  \\
 &  & RFE-K & 0.810~($\pm$0.009) & 0.513~($\pm$0.038) & \multicolumn{1}{c}{---} & 2.3  & 19.3  \\
 &  & RMILO-K & \textbf{0.822}~($\pm$0.005) & \textbf{0.557}~($\pm$0.010) & 295.3\% & 2.2  & $>$10000.0 \\ \midrule
100 & 10 & MILO-L & 0.808~($\pm$0.004) & \textbf{0.740}~($\pm$0.043) & 0.0\% & 5.0  & 0.1  \\
 &  & RFE-K & \textbf{0.836}~($\pm$0.009) & 0.639~($\pm$0.049) & \multicolumn{1}{c}{---} & 3.0  & 19.2  \\
 &  & RMILO-K & 0.827~($\pm$0.007) & 0.571~($\pm$0.000) & 0.0\% & 2.0  & 708.6  \\ \cmidrule{2-8}
 & 20 & MILO-L & 0.808~($\pm$0.005) & \textbf{0.720}~($\pm$0.053) & 0.0\% & 5.0  & 0.6  \\
 &  & RFE-K & \textbf{0.836}~($\pm$0.007) & 0.552~($\pm$0.028) & \multicolumn{1}{c}{---} & 3.0  & 19.5  \\
 &  & RMILO-K & 0.826~($\pm$0.007) & 0.571~($\pm$0.000) & 195.2\% & 2.0  & $>$10000.0 \\ \cmidrule{2-8}
 & 30 & MILO-L & 0.804~($\pm$0.005) & 0.540~($\pm$0.052) & 0.0\% & 5.0  & 0.9  \\
 &  & RFE-K & \textbf{0.830}~($\pm$0.007) & 0.518~($\pm$0.030) & \multicolumn{1}{c}{---} & 3.6  & 19.3  \\
 &  & RMILO-K & 0.826~($\pm$0.007) & \textbf{0.571}~($\pm$0.000) & 336.5\% & 2.0  & $>$10000.0 \\ \bottomrule
\end{tabular}
\label{tab:Syn_100_5}
\end{center}
\end{table}

\clearpage

\bibliography{sn-bibliography}% common bib file

\end{document}